\newtheorem{theorem}{Theorem}
\title{Learning Prototype Classifiers for Long-Tailed Recognition}
\author{
Saurabh Sharma$^1$
\and
Yongqin Xian$^2$\and
Ning Yu$^{3}$\And
Ambuj Singh$^1$
\affiliations
$^1$University of California Santa Barbara, USA\\
$^2$Google, Switzerland\\
$^3$Salesforce Research, USA\\
\emails
\{saurabhsharma, ambuj\}@cs.ucsb.edu,
yxian@google.com,
ning.yu@salesforce.com
}
\begin{document}

\maketitle

\begin{abstract}
The problem of long-tailed recognition (LTR) has received attention in recent years due to the fundamental power-law distribution of objects in the real-world. Most recent works in LTR use softmax classifiers that are biased in that they correlate classifier norm with the amount of training data for a given class. In this work, we show that learning prototype classifiers addresses the biased softmax problem in LTR. Prototype classifiers can deliver promising results simply using Nearest-Class-Mean (NCM), a special case where prototypes are empirical centroids. We go one step further and propose to jointly \emph{learn prototypes} by using distances to prototypes in representation space as the logit scores for classification. Further, we theoretically analyze the properties of Euclidean distance based prototype classifiers that lead to stable gradient-based optimization which is robust to outliers. To enable independent distance scales along each channel, we enhance Prototype classifiers by learning channel-dependent temperature parameters. Our analysis shows that prototypes learned by Prototype classifiers are better separated than empirical centroids. Results on four LTR benchmarks show that Prototype classifier outperforms or is comparable to state-of-the-art methods. Our code is made available at \href{https://github.com/saurabhsharma1993/prototype-classifier-ltr}{\textit{https://github.com/saurabhsharma1993/prototype-classifier-ltr}}.
\end{abstract}

\section{Introduction}
\label{Introduction}

Imbalanced datasets are prevalent in the natural world due to the fundamental power-law distribution of objects~\cite{van2017devil}. Past decades have seen a lot of research in class-imbalanced learning~\cite{estabrooks2004multiple,he2009learning}, and more recently, due to its far-reaching relevance in the era of deep-learning, the problem of long-tailed recognition (LTR) has received significant attention~\cite{cui2019class,cao2019learning,liu2019large} in the field of computer vision. 

A common underlying assumption of recent work in LTR is that softmax classifiers learned from imbalanced training datasets are biased towards head classes. They seek to rectify head class bias in the classifier through data-resampling~\cite{kang2019decoupling}, loss reshaping~\cite{ren2020balanced,menon2020long,samuel2021distributional}, ensemble-based models~\cite{xiang2020learning,wang2020long,zhou2020bbn}, and knowledge transfer~\cite{liu2019large,liu2021gistnet}. Particularly, it was shown in~\cite{kang2019decoupling} that deep models trained on imbalanced datasets have a high correlation of classifier norm to class size. To achieve similar classifier norms, they freeze the feature extractor and retrain the classifier with balanced class sampling in a second stage. Alternatively, they normalize every classifier by its $L_2$ norm or a power of its norm, leading to balanced classifer norms. More recently,~\cite{alshammari2022long} showed that balanced weights in both the feature extractor and classifier can be achieved by regularizing the model with higher weight decay and/or MaxNorm constraints.  

\begin{figure*}[!ht]{}
    \includegraphics[width=\textwidth]{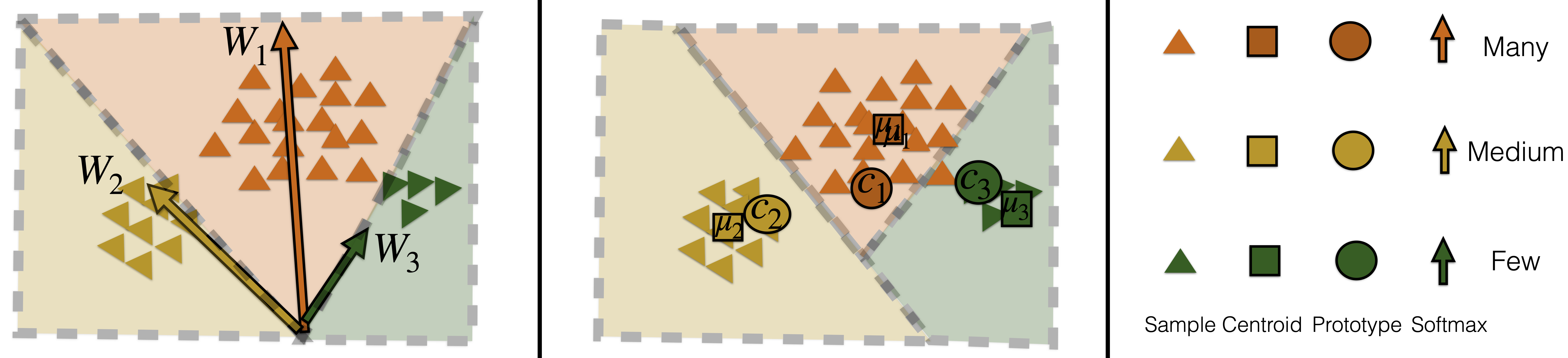}
    \caption{An illustration of Softmax vs Prototype classifiers for long-tailed data. Softmax classifiers have both a direction and a magnitude, indicated by the orientation and length of the classifier vector. During imbalanced training, the length of softmax classifiers gets correlated to the class size and leads to classification boundaries biased towards classes with Many samples. In contrast, prototype classifiers circumvent this shortcoming by using Euclidean distances to learnable prototypes within representation space, leading to more fair decision boundaries.}
    \label{fig:figure 1}
\end{figure*}

The shortcoming of correlating softmax classifier norm to class size is directly connected to the classification model. We show an illustration in Fig.~\ref{fig:figure 1}. Consider the problem of learning a classifier $f_W$ parameterized by weight matrix $W$. We assume the feature backbone $g_\theta$ is fixed and known. The classifier outputs logit scores $f_W(g(x)) = W^Tg(x)$, and the posterior probability $p(y|g(x)) \propto p(y)p(g(x)|y) \propto (W^Tg(x))_y$. Note that softmax classifier has both a magnitude and direction associated to it. In the ideal case, the directions get aligned with the class means $\mu_y$ to encode $p(g(x)|y)$~\cite{papyan2020prevalence,thrampoulidisimbalance}. But for an imbalanced training dataset, to encode the prior of the label distribution $p(y)$ into classifier weights, the learned classifier norm gets correlated to the class size.

To circumvent this shortcoming of softmax in the long-tailed classification setting, it is advantageous to change the classification model from dot-product to distances in the representation space. More specifically, associate a learnable prototype $c_y$ to each class $y$, and let $p(y|x) \propto -d(g(x),c_y)$, where $d(\cdot,\cdot)$ is a distance metric in the representation space. With a suitable choice of distance metric, such as Euclidean or cosine distances, we obtain a simple and effective prototype classifier which avoids the biased classifier norm problem. While prior work in long-tailed recognition explores the closely related Nearest-Class-Mean classifier \cite{kang2019decoupling}, which is a special case of our model with fixed prototypes, the full potential of prototype based classifiers has not been fully explored. In this work we thoroughly analyze prototype classifiers and show that they offer a promising alternative to softmax in long-tailed recognition.

We theoretically analyze and show that the Euclidean distance based prototype classifier has stable gradient-based optimization properties for learning prototypes. In particular, the L2 norm of the gradient on a prototype $c_y$ from a point $x$ is always equal in magnitude to the misclassification probability. Further, the gradient is oriented along the line joining $c_y$ to $x$. This leads to stable gradient updates that aren't affected by outliers that may be far from the prototypes. 

We also allow for variable scales along the different dimensions of the feature representations in the prototype distance. This is enabled by learning channel dependent temperature parameters in the prototype distance function. We find that this is beneficial since it's a means of learnable feature scaling and/or selection in the prototype classifier. 
It also lets the model learn a general Mahalanobis distance metric parameterized by a diagonal matrix. 

Overall, here is a summary of our main contributions: 

(1) We propose a novel learnable prototype classifier for long-tailed recognition to counter the shortcoming of softmax to correlate classifier norm to class size. This offers a promising distance-based alternative to dot-product classifiers that is relatively underexplored in the long-tailed setting. 

(2) We theoretically analyze the properties of Euclidean distance based prototype classifiers that leads to stable gradient-based optimization which is robust to outliers.  

(3) We enhance prototype classifiers with channel dependent temperature parameters in the prototype distance function, allowing for learnable feature scaling and/or selection. 

(4) We evalute our proposed prototype classifier on four benchmark long-tailed datasets, CIFAR10-LT, CIFAR100-LT, ImageNet-LT and iNaturalist18. Our results show that the prototype classifier outperforms or is comparable to the recent state-of-the-art methods. 

\section{Related Work}
\label{Related Work}

\paragraph{Prototype-based classifiers.} Prototype and Nearest-Class-Mean classifiers appear prominently in metric learning and few-shot learning. \cite{mensink2013distance} proposed learning a Mahalanobis distance metric on top of fixed representations, which leads to a NCM classifier based on the Euclidean distance in the learned metric space. \cite{snell2017prototypical} learn deep representations for few shot learning by minimizing the prototype based loss on task-specific epsiodes. \cite{guerriero2018deepncm} learn deep representations by optimizing the NCM classification objective on the entire training data. \cite{rebuffi2017icarl} learn NCM classifiers by maintaining a fixed number of samples to compute prototypes or centroids. In this work, we neither learn a distance metric or finetune representations using the NCM objective; instead our Prototype classifier directly learns the prototypes from pre-trained representations for long-tailed recognition.

\paragraph{Long-tailed recognition.} The literature on long-tailed recognition is quite vast. Data resampling or reweighting is the most commonplace strategy against imbalanced datasets. This generally takes three forms: (i) Oversampling minority class samples by adding small perturbations to the data~\cite{chawla2002smote,chawla2003smoteboost}, (ii) Undersampling majority class samples by throwing away some data~\cite{drummond2003c4}, and (iii) Uniform or class-balanced sampling based on the number of samples in a class~\cite{sun2019meta,xian2019f}. However, oversampling minority class samples has been shown to lead to overfitting and undersampling majority class samples can cause poor generalization~\cite{he2009learning}. Recently, ~\cite{kang2019decoupling} learn the representations using instance-based sampling and retrain the softmax classifier in a second step using uniform sampling while keeping underlying representations fixed. We also use the two-stage training strategy in our work; however, we avoid learning an additional softmax classifer and instead use Prototype classifiers as it operates in the representation space directly. Another line of prior work focuses on engineering loss functions suited for imbalanced datasets. Focal loss~\cite{lin2017focal} reshapes the standard cross-entropy loss to push more weight on misclassified and/or tail class samples. Class-balanced loss~\cite{cui2019class} uses a theoretical estimate of the volume occupied by a class to reweigh the loss function. \cite{cao2019learning} offset the label's logit score with a power of class size to achieve a higher decision boundary margin for tail classes. \cite{ren2020balanced} apply the offset to all the logits, and change the class size exponent from $1/4$ in prior work to $1$ and use a meta-sampler to retrain the classifier. \cite{menon2020long} apply a label-dependent adjustment to the logit score before softmax which is theoretically consistent. In our work, we build upon the logit adjustment approach and show how it can also be adapted to Prototype classifiers. Many other papers employ a specialized ensemble of experts to reduce tail bias and model variance. \cite{sharma2020long} train experts on class-balanced subsets of the training data and aggregate them using a joint confidence calibration layer. \cite{xiang2020learning} also train experts on class-balanced subsets and distill them into a unifed student model. \cite{wang2020long} explicitly enforce diversity in experts and aggregate them using a routing layer. \cite{zhou2020bbn} train two experts using regular and reversed sampling together with an adaptive learning strategy. In other works, self-supervised contrastive pretraining is used to improve feature representations for long-tailed recognition \cite{samuel2021distributional,cui2021parametric}. More recently, \cite{alshammari2022long} show that quite good performance on long-tailed datasets is possible by simply tuning weight decay properly. Thus, we build our Prototype classifiers on top of the representation learning backbone provided by them. However, the gain of Prototype classifiers over Softmax is invariant to the choice of the backbone, as we show in our experiments. 

\section{Our Approach}
\label{Section4}

\subsection{Prototype Classifier} 

The Prototype classifier assigns probability scores to classes based on distances to learnable prototypes in representation space. We assume that the representations $g(x)$ are fixed. In other words, the weights $\theta$ of the feature extractor $g_\theta$ are learned in a prior stage and fixed to enable learning of prototype classifiers. The decoupling of representation and classifier learning is commonly used to train softmax classifiers in long-tailed recognition \cite{kang2019decoupling}; here we adopt the same strategy to learn prototype classifiers. 

Let the learnable prototype for class $y$ be denoted by $c_y$. Under the assumption that the class-conditional distribution $p(g(x)|y)$ is a Gaussian with unit variance centered around the prototype $c_y$, we get the following probabilistic model for the class posterior $p(y|g(x))$:  
\begin{align}
\begin{split}
\label{eq:1}
  \log p(y|g(x)) &\propto -\frac{1}{2}d(g(x),c_y) \\
  &= \log{\frac{e^{-\frac{1}
  {2}d(g(x),c_y) }}{\sum_{y'} e^{-\frac{1}{2}d(g(x),c_y')}}}
\end{split}
\end{align}  

This leads to the nearest class prototype decision function for classification:
\begin{align*}
y^{*} &= \underset{{y \in \{1,\dots,\Bar{y}\}}}{\text{argmin}} d(g(x),c_y)   
\end{align*}
Different choices of distance metric lead to different prototype classifiers. While squared Euclidean distance and Euclidean distance lead to equivalent decision boundaries, we find that Euclidean distance is more amenable for learning of the prototypes due to its stable gradient. The Prototype classifier with Euclidean distance metric specifically uses:
\begin{align}
\label{eq:2}
    d(g(x),y) &= \sqrt{(g(x) - c_y)^T(g(x) - c_y)} 
\end{align}
The prototypes are learned using the Maximum Likelihood Estimation principle by minimizing the Negative Log Likelihood objective function:
\begin{align}
\label{eq:3}
L &= -\frac{1}{N} \sum\limits_{i}^N \log p(y_i|g(x_i))  
\end{align}
The remainder of this section is organized as follows: in Section~\ref{gradient updates}, we show how Euclidean distance leads to stable gradient updates on prototypes, then in Section~\ref{biased softmax} we show how learning prototypes addresses the biased softmax problem, followed by Section~\ref{cdt} which details our channel-dependent temperatures to enhance prototype learning, and Section~\ref{la} which describes logit adjustment for learning prototypes. Our entire pipeline in presented in Algorithm~\ref{alg: algorithm1}.

\subsection{Stable Gradient Updates on Prototypes}
\label{gradient updates}
In Prototype classifier, we use the Euclidean distance in Eq.~\ref{eq:2} due to its stable gradient optimization. We now describe the gradient updates on the prototypes. 

\begin{theorem}
    Let $L_{xy}$ denote the NLL loss on a single sample $x$ with label $y$. Then, the $L_2$ norm of the gradient on prototype $c_z$ is given by, 
    \[ \|\frac{\partial L_{xy}}{c_z}\|_2 = \left\{
  \begin{array}{lr}
    1 - p(y|g(x)) & : z = y \\
    p(z|g(x)) & : z \neq y 
  \end{array}
\right.
\]
Further,  if $z = y$, then the gradient descent direction lies along $g(x) - c_z$, else it is exactly the opposite direction.
\end{theorem}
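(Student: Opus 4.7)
The plan is to compute $\partial L_{xy}/\partial c_z$ by direct chain rule through the NLL loss, at which point both the magnitude and direction claims fall out of the single observation that the gradient of the Euclidean distance with respect to its prototype argument is a unit vector.

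First, I would expand the loss using Eq.~\ref{eq:1} and Eq.~\ref{eq:3}, writing $d_z := \|g(x)-c_z\|_2$ and
\[
L_{xy} \;=\; \tfrac{1}{2}d_y + \log\sum_{y'} e^{-d_{y'}/2}
\]
(the $\tfrac12$ comes from the Gaussian modeling in Eq.~\ref{eq:1} and simply rescales the final coefficient; I will carry it through but note it is inessential). The central algebraic fact I would establish up front is
\[
\nabla_{c_z} d_z \;=\; \frac{c_z - g(x)}{d_z},
\]
so that $\|\nabla_{c_z} d_z\|_2 = 1$ and the vector points from $g(x)$ toward $c_z$. This is the one place where using Euclidean rather than squared Euclidean distance is essential: it produces a norm-one gradient factor, which is what ultimately allows the norm to collapse cleanly.

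Next, I would split into the two cases $z=y$ and $z\neq y$. For $z=y$, the linear term $\tfrac12 d_y$ contributes $\tfrac12\nabla_{c_y} d_y$, while differentiating the log-partition term contributes $-\tfrac12 p(y|g(x))\,\nabla_{c_y} d_y$; combining gives
\[
\frac{\partial L_{xy}}{\partial c_y} \;=\; \tfrac12\bigl(1 - p(y|g(x))\bigr)\,\nabla_{c_y} d_y.
\]
For $z\neq y$, only the log-partition term survives, yielding
\[
\frac{\partial L_{xy}}{\partial c_z} \;=\; -\tfrac12\, p(z|g(x))\,\nabla_{c_z} d_z.
\]
Taking $L_2$ norms and invoking $\|\nabla_{c_z} d_z\|_2 = 1$ immediately produces the stated magnitudes, modulo the global $\tfrac12$ that the authors absorb into their loss normalization.

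Finally, the direction claim reduces to reading off the sign of the scalar coefficient. When $z=y$, the coefficient $1-p(y|g(x))$ is nonnegative, so the descent direction $-\partial L_{xy}/\partial c_y$ is a nonnegative multiple of $-(c_y - g(x)) = g(x)-c_y$. When $z\neq y$, the coefficient $-p(z|g(x))$ is nonpositive, so the descent direction is a nonnegative multiple of $c_z - g(x)$, i.e.\ exactly opposite to $g(x)-c_z$. There is no genuinely hard step here; the only subtlety is keeping signs straight between the target-class contribution and the log-sum-exp contribution, and noticing that both share the identical unit-vector factor, which is precisely why the two contributions either reinforce (to give $1-p$) or stand alone (to give $p$) so neatly.
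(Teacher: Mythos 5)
Your proposal is correct and follows essentially the same route as the paper's proof: a direct chain-rule computation in which the key fact is that $\nabla_{c_z}\,d(g(x),c_z) = (c_z - g(x))/d(g(x),c_z)$ is a unit vector, so the norm collapses to the (mis)classification probability and the direction is read off from the sign of the scalar coefficient. You are in fact slightly more careful than the paper, which silently drops the global factor of $\tfrac12$ arising from the $e^{-\frac{1}{2}d}$ parameterization, whereas you track it and note it is inessential to the claim.
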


\begin{proof}
Let $Z(x) = {\sum_{y'} e^{-\frac{1}{2}d(g(x),c_y')}}$ denote the normalization term over all the classes, and $\delta_{yz}$ denotes the Kronecker delta. Then, we get a closed form expression for the gradient using repeated application of the chain rule,
\begin{align*}
\begin{split}
\frac{\partial L_{xy}}{\partial c_z} &= \frac{\partial 
(-\log{\frac{e^{-\frac{1}{2}d(g(x),c_y)}}{Z(x)}})}{\partial c_z} \\
&= \frac{\partial 
(-\log{\frac{e^{-\frac{1}{2}\sqrt{c_y^Tc_y - 2g(x)^Tc_y + g(x)^Tg(x)}}}{Z(x)}})}{\partial c_z} \\
        &= \frac{\delta_{yz}(1-p(z|g(x)))(c_z-g(x))}{d(g(x),c_z)} \\
        &+ \frac{(1-\delta_{yz})p(z|g(x))(g(x)-c_z)}{d(g(x),c_z)}
\end{split}
\end{align*}
\end{proof}

From the gradient update, we see that if sample $x$ belongs to class $y$, then the updated prototype is shifted in the direction ($g(x)-c_z$), weighted by the  misclassification probability $(1-p(z|x))$, otherwise it's shifted in the opposite direction weighted by the misclassification probability $p(z|x)$. 

In both cases, the $L_2$ norm of the gradient is equal to the misclassification probability, leading to stable gradient updates. Importantly, it is invariant to $d(g(x),c_z)$, thus both inliers and outliers contribute equally. On the other hand, if we use squared Euclidean distance, it is easy to see that the $L_2$ norm of the gradient is scaled by $d(g(x),c_z)$, which is highly sensitive to outliers. We also verify empirically in Section~\ref{ablations} that Euclidean distance outperforms squared Euclidean distance for learning prototypes.  

\subsection{Addressing the Biased Softmax Problem}
\label{biased softmax}
We show how learning prototypes addresses the biased softmax problem by drawing a connection with the softmax classifier. We consider below a Prototype classifier based on the squared Euclidean distance metric: 

\begin{align*}
-\frac{1}{2}d(g(x),c_y) &= -\frac{1}{2}(g(x) - c_y)^T(g(x)-c_y) \\
&= -\frac{1}{2}(c_y^Tc_y - 2g(x)^Tc_y + g(x)^Tg(x))
\end{align*}

Since the term $-\frac{1}{2}g(x)^Tg(x)$ is shared by all classes, it can be ignored. Thus, the Prototype classifier with squared Euclidean distance metric is a softmax classifier with weight $c_y$ and bias $-\frac{1}{2}c_y^Tc_y$. 
Now, we know that softmax can increase or decreases the norm of its weight terms to account for an imbalanced prior label distribution $p(y)$~\cite{kang2019decoupling}. However, the same is not true for the Prototype classifier, because the bias term negates the gains from increasing or decreasing the norm of the weight term. Due to this coupling of the weight and bias terms, the Prototype classifier is more robust to imbalanced distributions and learns equinorm prototypes. We further show in our experiments in Section~\ref{prototype norms} that the learned prototypes
are actually very similar in norm. 

\subsection{Channel-Dependent Temperatures}
\label{cdt}
While computing distances for prototype classifiers, it is worthwhile to consider that all dimensions in the representation space may not have the same distance scale. Specifically, for a large distance along a dimension with high variance may not mean as much a large distance along a dimension with low variance. Thus, we define the channel-dependent temperature (CDT) prototype distance function using learnable temperatures $T_i$,
\begin{align}
\label{eq:4}
    d_{CDT}(g(x),y) &= \sqrt{\sum\limits_{i=1}^d\frac{(g(x) - c_y)_i^2}{T_i}}
\end{align}
A high value of temperature $T$ implies low sensitivity to distances along that dimension, and a low value implies high sensitivity. Learnable temperatures allow the model to learn feature scaling in the distance function. By letting the temperature approach very high values the model can also filter out potentially harmful features, i.e, feature selection. Finally, the CDT distance can also be interpreted as a Mahalanobis distance $\sqrt{(g(x)-c_y)^T\Sigma^{-1}(g(x)-c_y)}$, where $\Sigma$ is a diagonal matrix with the temperatures on the diagonal. 

\subsection{Logit Adjustment for Learning Prototypes}
\label{la}
Recent work by ~\cite{menon2020long} suggests that, for a classifier trained on an imbalanced dataset, it is important to perform an additive adjustment to the logits based on the distribution prior. Motivated by this strategy, we modify the prototype classifier loss function in Eq.~\ref{eq:3} as follows:
\begin{align}
    \label{eq:5}
    L_{xy} = -\log{\frac{e^{-\frac{1}{2}d(g(x),c_y) + \tau \cdot \log{N_y}}}{\sum_{y'} e^{-\frac{1}{2}d(g(x,c_y') + \tau \cdot \log{N_{y'}}}}}
\end{align}
Here $N_y$ denotes the number of training samples for class $y$ and $\tau$ is a hyperparameter. Logit adjustment is only done during training and not during inference. It places a higher penalty on misclassifying tail class samples relative to head classes. Empirical results show that logit adjustment does yield better results for learning prototype classifiers.

\begin{algorithm}[t!]
\SetKwInput{Otp}{Output}
\SetKwInput{Itp}{Input}
\SetKwInput{Require}{Require}
\SetKwInOut{Return}{return}
\Require{Pretrained feat reps $g(x)$, class means $\mu_y$}
\Require{Class-sizes $N_y$, logit-adj weight $\tau$, lrs $\alpha, \beta$}
Initialize prototypes $c_y = \mu_y$ and temperatures $T_i=1$\;
\For{$j\gets0$ \KwTo num\_batches}{
    Sample class-balanced batch $(g(x_k),y_k)_{k=1}^{batchsize}$\;
    $\forall{k}$ \& $\forall{y'}$ compute $d_{CDT}(g(x_k),c_{y'})$ using Eq.~\ref{eq:4}
    Compute $L = \sum_k{L_{x_ky_k}}$ using Eq.~\ref{eq:5}\;
    $\forall{y'}$ update $c_{y'} \gets c_{y'} - \alpha\nabla_{c_{y'}}L$\;
    $\forall{T_i}$ update $T_i \gets T_i - \beta\nabla_{T_i}L$\;
}
\caption{Learning prototype classifier}
\label{alg: algorithm1}
\end{algorithm}

\section{Experiments}
\label{Section5}
\subsection{Datasets}
We evaluate our proposed method on the following three benchmark long-tailed datasets:
1. \textbf{CIFAR100-LT} \cite{cao2019learning}: CIFAR100 consists of 60K images from 100 classes. Following prior work, we exponentially decay number of training samples and control the degree of data imbalance with an imbalance factor $\beta$. $\beta = N_{max}/N_{min}$, where $N_{max}$ and $N_{min}$ are the maximum and minimum number of training images per class respectively. $N_{max}$ is kept fixed at 500, and we report on three commonly used imbalance ratios $\beta \in [100, 50, 10]$. The validation set consists of 100 images per class and is also used as the test set.
2. \textbf{ImageNet-LT} \cite{liu2019large}: This is a long-tailed split of ImageNet. ImageNet-LT has an imbalanced training set with 115,846 images for 1,000 classes from ImageNet-1K~\cite{deng2009imagenet}. The class frequencies follow a natural power-law distribution~\cite{van2017devil} with a maximum of 1,280 and a minimum of 5 images per class. The validation and testing sets are balanced and contain 20 and 50 images per class respectively.
3. \textbf{iNaturalist-LT} \cite{van2018inaturalist}: This is a species classification dataset that naturally follows a long-tailed distribution. There are 8,142 classes, with a maximum of 118,800 images per class and a minimum of 16. The validation set contains 3 images per class and is also used as the test set. 
\subsection{Evaluation Protocol}
To distinguish the performance of the model on head vs tail classes, we report average top-1 accuracy on balanced test sets across four splits, \emph{Many}: classes with $\geq100$ samples, \emph{Med}: classes with $20\sim100$ samples, \emph{Few}: classes $<20$ samples, and \emph{All} classes. Since the test sets are balanced, average accuracy and mean precision are the same. 

\subsection{Implementation Details}
We train prototype classifiers on fixed representations from a prior pre-trained model. Our training pipeline consists of two stages, representation learning followed by prototype learning. To compare fairly with prior work, we use the same backbone architectures: ResNet-32 for CIFAR100, ResNeXt50 for ImageNet-LT and ResNet-50 for iNaturalist18. 

\paragraph{Representation learning.} We train all models end to end with an auxilliary softmax classifier by minimizing the average cross-entropy classification loss. At the end of training we discard the softmax classifier. We use instance-balanced data sampling \cite{kang2019decoupling} and regularize model weights using high weight decay \cite{alshammari2022long}, which achieves balanced weights in the feature extractor.  We use the SGD optimizer with momentum 0.9, learning rate 0.01 along with cosine learning rate schedule \cite{loshchilovsgdr} decaying to 0 for all models. For CIFAR100-LT and ImageNet-LT we train for 200 epochs with batch size 64 and for iNaturalist18 we train for 200 epochs with batch size 512. The weight decay parameter is found using Bayesian hyperparameter search \cite{nogueira2014bayesian}. We use random cropping and horizontal flipping for training data augmentation. All our training is done on NVIDIA V100 GPUs. 

\paragraph{Prototype learning.} We freeze representations and compute empirical centroids to initialize the class prototypes. We minimize the average logit-adjusted cross entropy loss with logit-adjustment weight $\tau=1/4$. We train for 1 epoch with class-balanced sampling, set prototype learning rate 4, CDT learning rate 0.005 and use the SGD optimizer with momentum 0.9. We also use random cropping, horizontal flipping, Autoaug~\cite{cubuk2018autoaugment} and Cutout~\cite{devries2017cutout} for data augmentation.  

\begin{figure*}[!t]
    \centering
    \begin{subfigure}[t]{0.245\textwidth}
        \includegraphics[width=\textwidth]{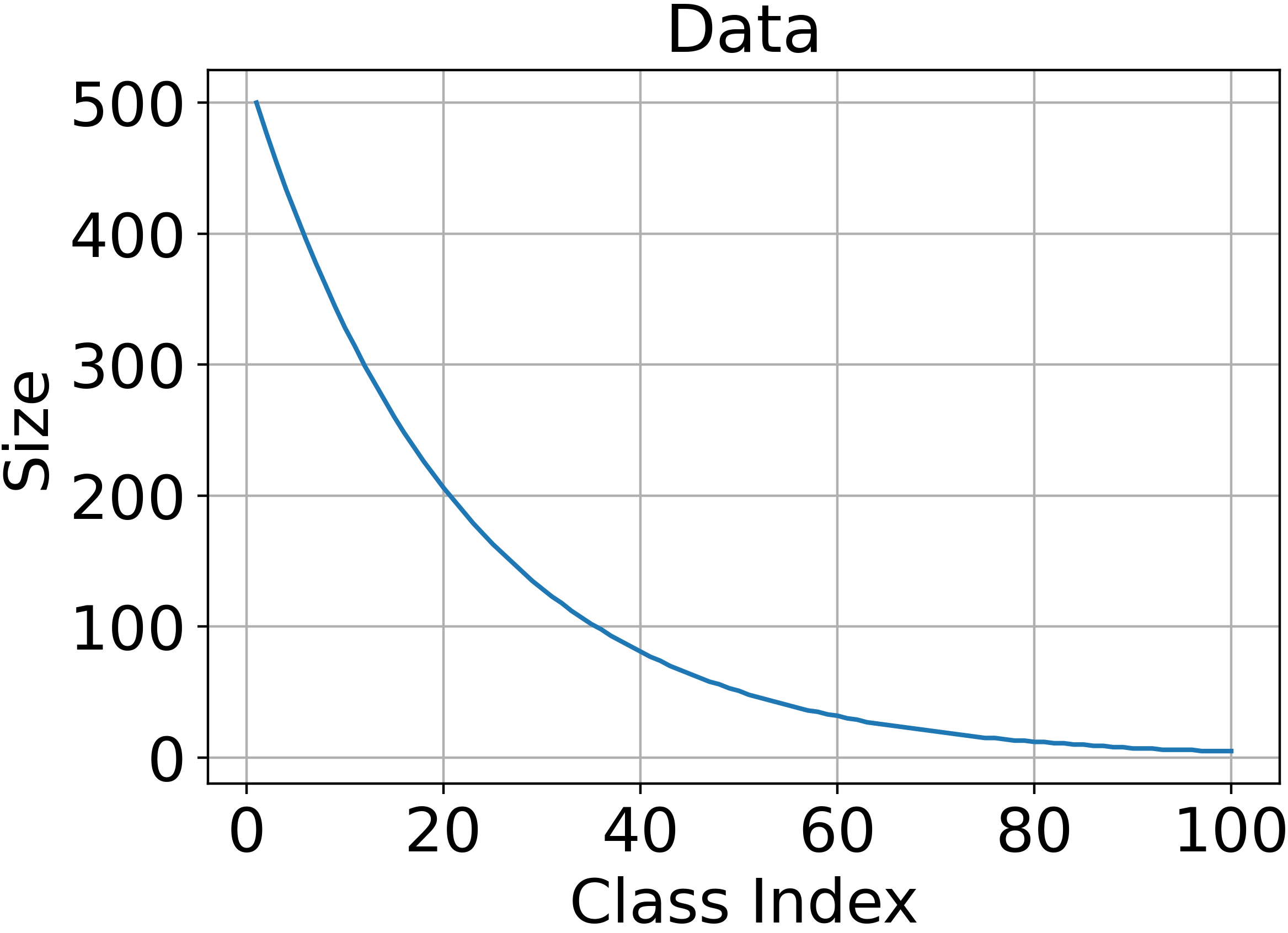}
        \caption{}
        \end{subfigure}
      \begin{subfigure}[t]{0.245\textwidth}
        \includegraphics[width=\textwidth]{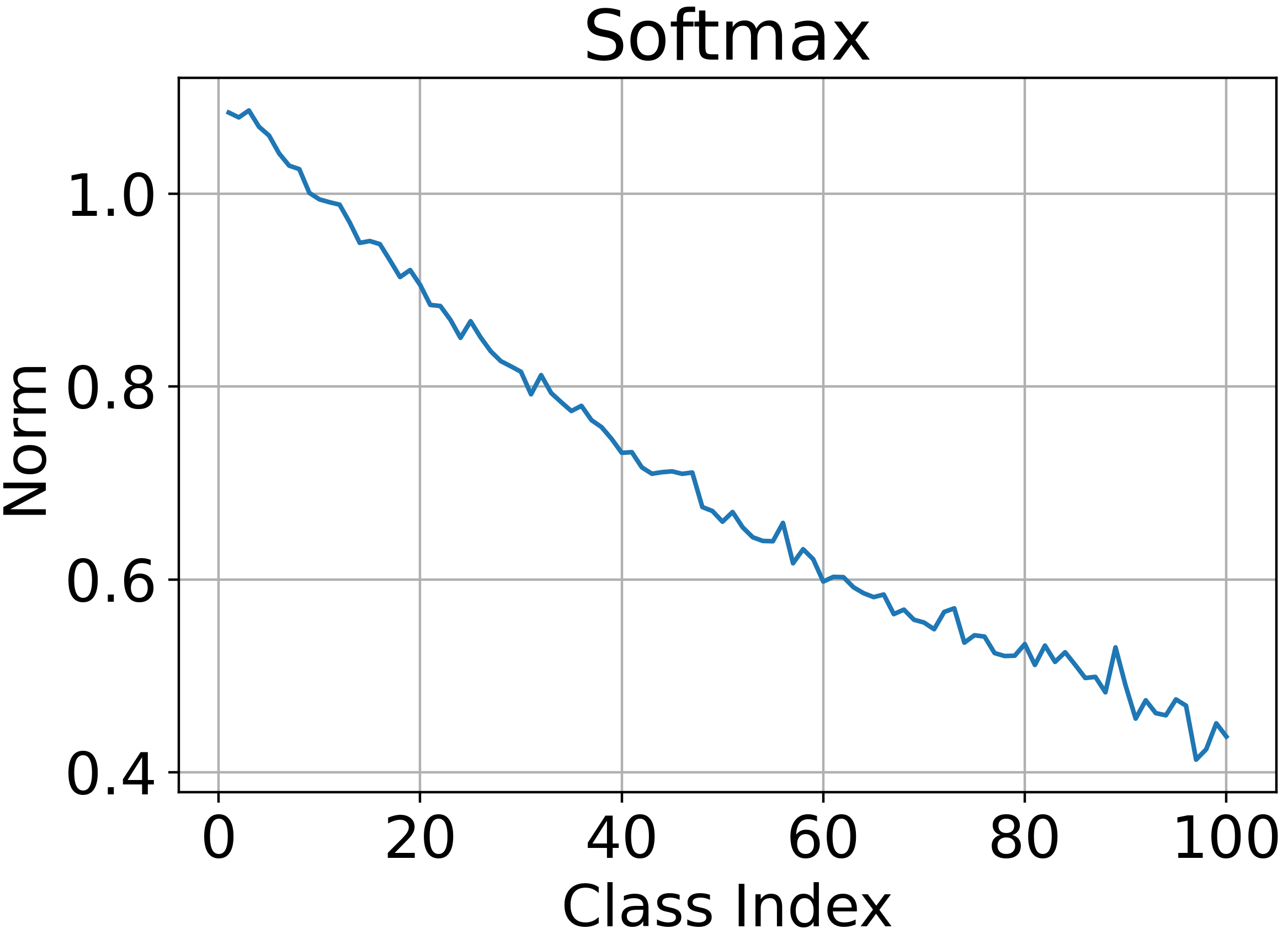}
        \caption{}
        \end{subfigure}
      \begin{subfigure}[t]{0.238\textwidth}
        \includegraphics[width=\textwidth]{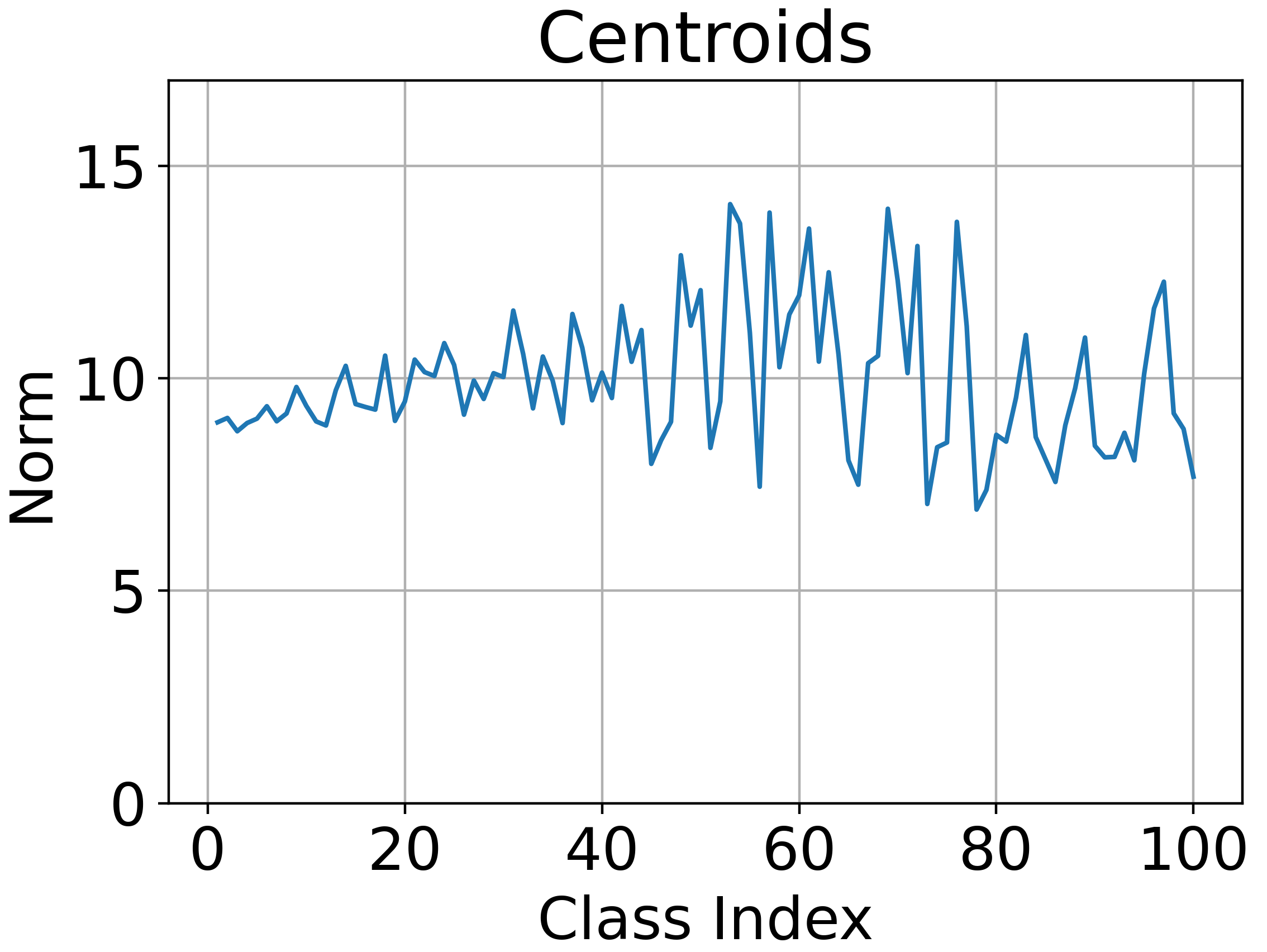}
        \caption{}
        \end{subfigure}
    \begin{subfigure}[t]{0.238\textwidth}
        \includegraphics[width=\textwidth]{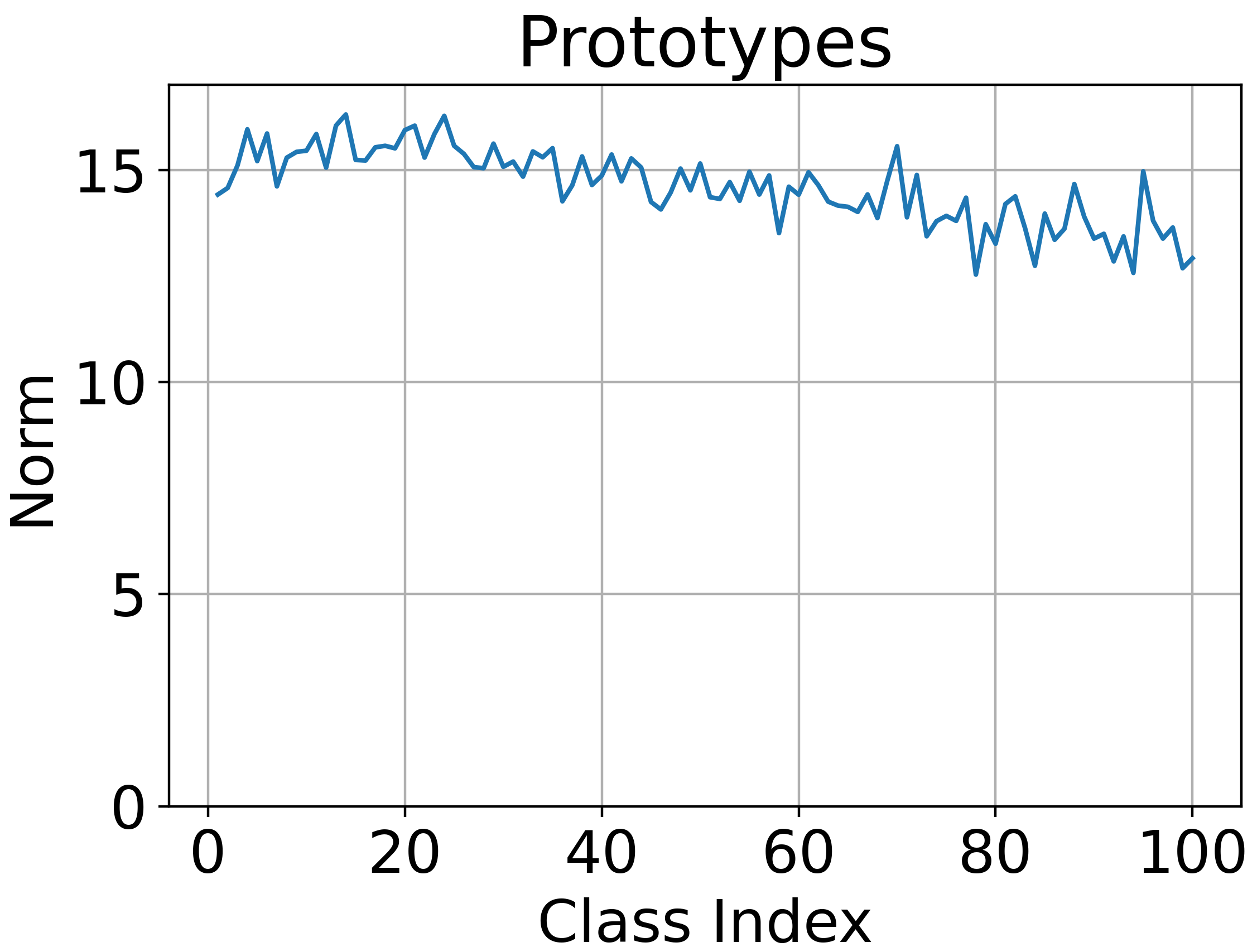}
        \caption{}
        \end{subfigure}
   \caption{Comparison of softmax and prototype norms. L-R: a) Class size vs index, b),c) and d): Norm of softmax classifer, centroids and prototypes vs class index respectively on CIFAR100-LT. Classes are sorted according to decaying class size. Prototype classifier achieves balanced norms, in contrast to the biased softmax classifier and the imbalanced centroid norms.} 
    \label{fig:fig2}
\end{figure*}

\begin{figure*}[!t]
    \centering
    \begin{subfigure}[t]{0.49\textwidth}
        \includegraphics[width=\textwidth]{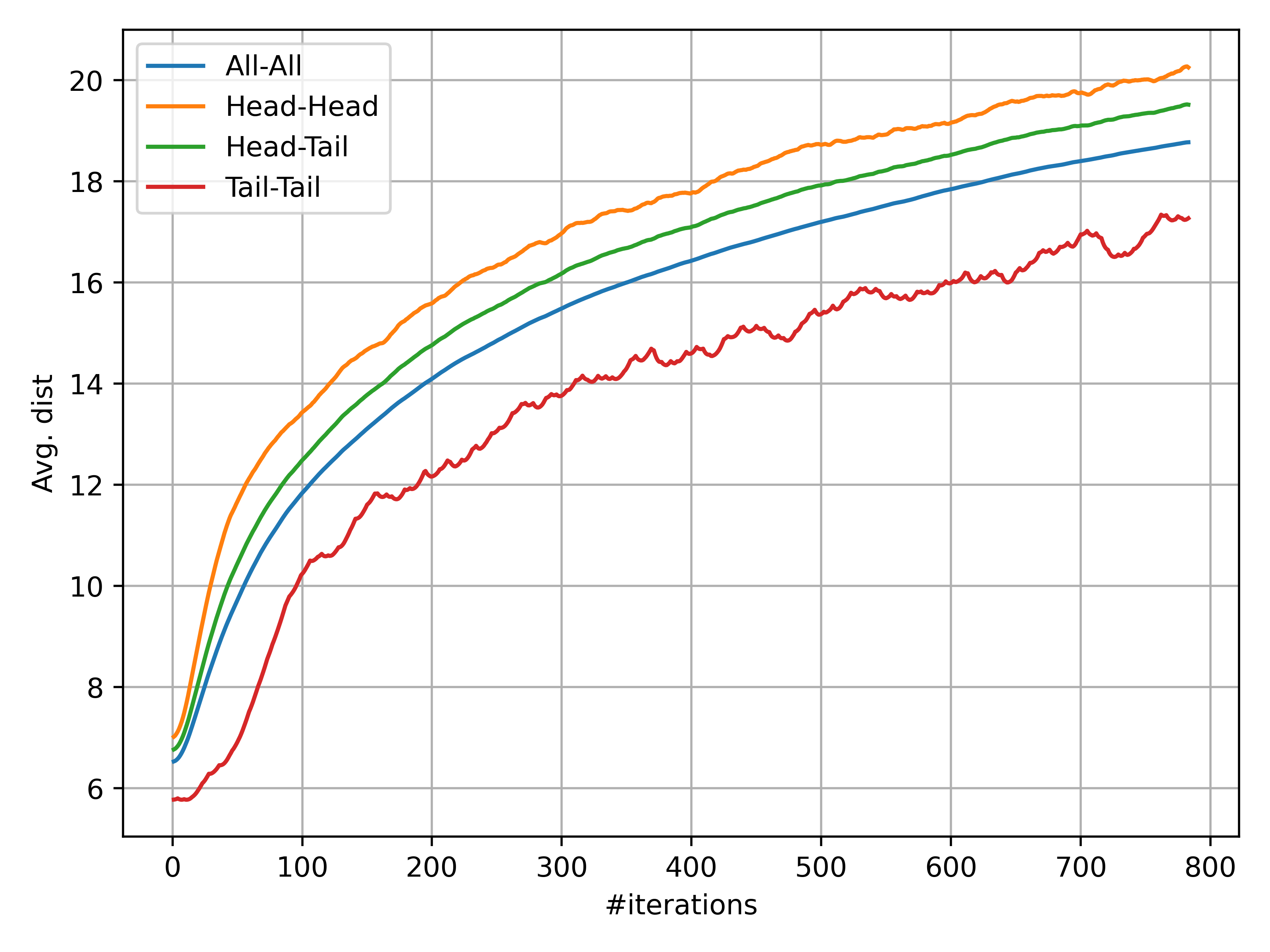}
        \caption{}
        \end{subfigure}
      \begin{subfigure}[t]{0.49\textwidth}
        \includegraphics[width=\textwidth]{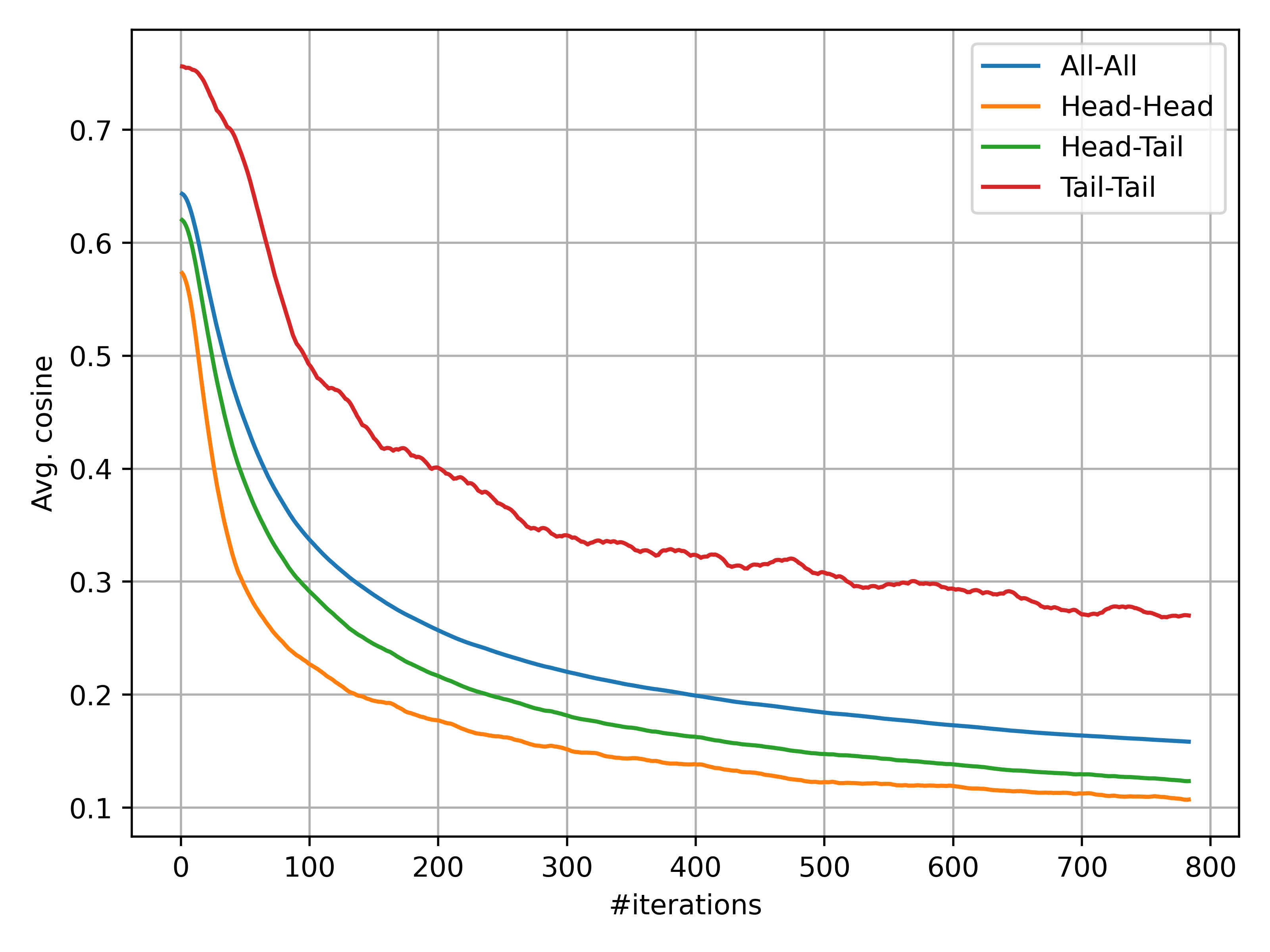} 
        \caption{}
        \end{subfigure}
   \caption{Evolution of prototypes during training. a) The average Euclidean distance between prototypes vs number of training iterations. b) The average cosine similarity between prototypes vs number of training iterations, a lower value implies higher angular separation. We compare across pairs of All-All, Head-Head, Head-Tail and Tail-Tail class prototypes. Euclidean and angular separation increase monotonically. Tail-Tail classes have smaller separation than Head-Head, which is due to the Minority Collapse effect in imbalanced training.} 
    \label{fig:fig3}
\end{figure*}

\subsection{Ablation Study}
\label{ablations}
\paragraph{Model components.} The benefits gained by adding different components to our Prototype classifier (PC) are investigated in an ablation study over CIFAR100-LT with imbalance ratio 100, shown in Table~\ref{tab:table 1}. We make several observations: (i) NCM, which is a special case of our model where the empirical centroids are used as class prototypes, i.e $c_y = \frac{1}{N_i}g(x)$, makes considerable gains above softmax, boosting \textit{All} accuracy from 46.12 to 48.67 and \textit{Few} accuracy from 11.34 to 14.13. NCM does not suffer from classifier norm correlation to class size like softmax and can do better even without any learning. (ii) Prototype classifier (PC) lifts us above NCM significantly, achieving ~4\% improvement on \textit{All} accuracy and ~10\% improvement on \textit{Few} accuracy. Thus, learning prototypes gives a far superior nearest-prototype model than simply using the class-centroids as done by previous work. (iii) Adding channel dependent temperatures (CDT) to PC further improves the results by 0.5\% on \textit{All} accuracy and ~4\% on \textit{Few} accuracy. CDT enhances the model by effecting distinct distance scales for all the features. (iv) Logit adjustment (LA) further improves results for PC. By combining both CDT and LA to PC we're able to gain ~1\% and ~9\% improvements  over the baseline PC in \textit{All} and \textit{Few} accuracies respectively. In the remainder, PC refers to our best model PC + CDT + LA. 

\begin{table}[t!]
    \centering
    \begin{tabular}{l | l l l l }
    \toprule
    Method & Many & Med & Few & All \\
    \midrule
    Softmax & 76.12 & 46.13 & 11.34 & 46.12  \\
    \midrule
    NCM & \textbf{76.54} & 50.40 & 14.13 & 48.67 \\
    PC & 74.00 & 55.17 & 24.50 & 52.56 \\
     + CDT  & 70.77 & 55.29 & 28.63 & 53.06 \\
     + LA  & 69.29 & 54.63 & 32.67 & 53.17 \\
     + CDT + LA  & 70.17 & \textbf{55.89} & \textbf{33.93} & \textbf{53.41} \\
    \bottomrule
    \end{tabular}
    \caption{Ablation study for CIFAR100-LT with imbalance ratio 100. NCM, PC, CDT and LA refer to Nearest-Class-Mean, Prototype classifier, Channel-dependent Temperature and Logit adjustment respectively. NCM alone with class centroids outperforms Softmax. PC via learning prototypes outperforms NCM by 4\% on \textit{All} accuracy. Adding CDT and LA further yields 1\% improvement in \textit{All} accuracy.}
    \label{tab:table 1}
\end{table}

\paragraph{Distance metrics.}
We experiment with different distance metrics for the Prototype classifier, particularly (a) Euclidean distance from Eq.~\ref{eq:2}, (b) Squared Euclidean distance and (c) Cosine distance: $d(g(x),c_y) = 1 - \frac{g(x)^Tc_y}{\|g(x)\|\|c_y\|}$. The results on CIFAR100-LT are depicted in Table~\ref{tab:table 2}. We make some important observations: (i) Squared euclidean distance does considerably worse, with \textit{All} accuracy dropping by more than 30\%. This agrees with our theoretical analysis that the Squared euclidean distance is highly sensitive to outliers for learning prototypes. (ii) Cosine distance underperforms Euclidean distance on \textit{All} and \textit{Tail} accuracy. This suggests that the geometry of the representation space is closer to Euclidean and not hyperspherical.

\begin{table}[t!]
    \begin{tabular}{l | l l l l }
    \toprule
    Method & Many & Med & Few & All \\
    \midrule
    Softmax &  \textbf{76.12} & 46.13 & 11.34 & 46.12  \\
    \midrule
    PC + Euclidean  & 70.17 & \textbf{55.89} & \textbf{33.93} & \textbf{53.41} \\
    PC + Squared Euclidean & 39.89 & 21.77 & 8.10 & 24.01 \\
    PC + Cosine & 72.94 & 54.80 & 24.77 & 52.14 \\
    \bottomrule
    \end{tabular}
    \caption{Effect of different distance metrics in learning prototypes on CIFAR100-LT. Euclidean distance outperforms Squared Euclidean and Cosine distance. Note the really bad performance of Squared Euclidean, which agrees with our theoretical analysis that its prone to outliers.}
    \label{tab:table 2}
\end{table}

\paragraph{Feature backbones.}
Since Prototype classifier is agnostic to the feature backbone used to generate representations, we experiment with various representation learning and classifier learning schemes from the LTR literature. Particularly, we use CE, DRO-LT~\cite{samuel2021distributional} and WD~\cite{alshammari2022long} as pre-trained feature backbones. For classifier learning, we use CE, cRT~\cite{kang2019decoupling}, WD + MaxNorm~\cite{alshammari2022long} and Prototype classifier (PC). The results on CIFAR100-LT are depicted in Table~\ref{tab:table 3}. We observe that Prototype classifier outperforms the competing classifier learning schemes for all the different feature backbones, thus showing that the performance gain is agnostic to the choice of backbone. Our best results are obtained using the WD baseline, which applies strong weight decay for regularizing feature weights. 
\begin{table}[ht]
    \begin{tabular}{p{2.4cm} | l | l | p{1.5cm} | l}
    \toprule
    & \multicolumn{4}{c}{Classifier learning}\\
    \midrule
    Representation Learning & CE & cRT & WD + MaxNorm & PC (Ours) \\
    \midrule
    CE & 38.3 & 41.3 & 43.4 & \textbf{44.1} \\
    DRO-LT & 41.2 & 47.3 & 47.8 & \textbf{48.2} \\
    WD & 46.1 & 50.1 & 53.3 & \textbf{53.4} \\
    \bottomrule
    \end{tabular}
    \caption{Effect of different representation and classifier learning schemes for CIFAR100-LT. PC outperforms competing classifier learning schemes for all different backbones.}
    \label{tab:table 3}
\end{table}

\subsection{Prototype Norms}
\label{prototype norms}
To understand the advantage of prototype classifier over softmax, we plot the norm of softmax classifier, empirical centroids and learned prototypes In Fig.~\ref{fig:fig2}. We recover the correlation of classifier norm to class size for softmax, as shown by prior work. For empirical centroids, it is interesting to see that this correlation does not hold; however the centroid norms are far from balanced. After learning the prototypes, we see that the norms become more balanced, with a slight favor towards the head classes. Since higher norm implies a negative bias in the prototype classifier, this actually favors the tail classes slightly.    

\subsection{How Do Prototypes Evolve During Training?}
\label{Section6}
We are interested in the evolution of the class prototypes as training proceeds. Particularly, we seek to know the average separation in terms of Euclidean distance, as well as the average cosine similarity between the class prototypes. In Fig.~\ref{fig:fig3}, we plot the average of these values between pairs of All-All classes, Head-Head classes, Head=Tail classes and Tail-Tail classes. The average distance increases monotonically for all the pairs. The separation between Head-Head classes is greater than Tail-Tail classes. Similarly, the average cosine similarity decreases monotonically for all the pairs, therefore the angular separation increases. There is higher angular separation between Head-head classes over Tail-Tail classes. Thus, the prototype classifier learns prototypes that are better separated than the empirical centroids, leading to lower classification error using the nearest-prototype decision function. It also mitigates Minority Collapse, wherein Minority Class means collapse towards the same point~\cite{fang2021exploring}.

\section{Comparison to State-of-the-Art}
\paragraph{Compared methods.} We compare against recently published and relevant methods in the LTR field. We choose representative models from the different solutions spaces- (i) data reweighting: Classifier retraining (cRT)~\cite{kang2019decoupling} and DisAlign~\cite{zhang2021distribution}, (ii) loss reshaping: Focal loss \cite{lin2017focal}, Label-distribution aware margin (LDAM) loss \cite{cao2019learning} and Logit-adjustment loss~\cite{menon2020long}, (iii) Ensemble-based models: RIDE~\cite{wang2020long}, (iv) Self-supervised contrastive learning: DRO-LT~\cite{samuel2021distributional} and PaCO~\cite{cui2021parametric}, and (v) Weight Regularization: WD, and WD + WD + Max~\cite{alshammari2022long}. 

\paragraph{Results.} Our results on CIFAR100-LT, ImageNet-LT and iNaturalist18 are reported in Table~\ref{tab:table 4}, Table~\ref{tab:table 5} and Table~\ref{tab:table 6} respectively. The numbers of compared methods
 are taken from their respective papers. Prototype classifier outperforms the prior state-of-the-art methods DRO-LT, DisAlign etc. on all imbalance ratios in CIFAR100. We outperform WD + WD + Max, with whom we share the weight decay tuned baseline WD, on all three benchmark datasets. For ImageNet-LT, Prototype classifier outperforms all prior state-of-the-arts on \textit{Tail} and \textit{All} accuracy except models with ``bells and whistles,'' particularly RIDE, which learns and fuses multiple models into an ensemble. A similar conclusion can be derived on iNaturalist18, where our results rival prior state-of-the-arts, except the ensemble model RIDE, and PaCO, which uses self-supervised pre-training and aggressive data augmentation techniques~\cite{cubuk2020randaugment,he2020momentum}. However, since Prototype classifier is a classifier layer that can be applied to any feature backbone, our method is complimentary to ensembles and self-supervised pre-training. We leave this exploration for future work. 

\section{Conclusion}
In this article, we proposed Prototype classifiers to address the biased softmax softmax problem for long-tailed recognition. Prototype classifiers do not suffer from the biased softmax problem of correlating classifier norm to class size. This is because they rely on distances to learnable prototypes in representation space instead of dot-product with learnable weights, which leads to a coupling of the weight and bias terms. Our theoretical analysis shows that Euclidean distance is stable for gradient-based optimization, and this is confirmed empirically. We further enable independent distance scales in the prototype distance function using channel-dependent temperatures. Our results on four benchmark long-tailed datasets show that Prototype classifier outperforms or is comparable to the prior state-of-the-art. Moreover, experiments reveal that the learned prototypes are equinorm and are much better separated in representation space than centroids..
 
\begin{table}[h!]
    \centering
    \begin{tabular}{l c c c}
    \toprule
    Imbalance ratio & 100 & 50 & 10 \\
    \midrule
    CE & 38.32 & 43.85 & 55.71 \\
    Focal Loss & 38.41 & 44.32 & 55.78 \\
    LDAM Loss & 39.60 & 44.97 & 56.91 \\
    cRT & 41.24 & 46.83 & 57.93 \\
    LogitAdjust & 42.01 & 47.03 & 57.74 \\
    DRO-LT & 47.31 & 57.57 & 63.41 \\
    PaCO & 52.00 & 56.00 & 64.20 \\
    RIDE & 49.10 & - & - \\
    WD & 46.08 & 52.71 & 66.03 \\
    WD + WD + Max & \underline{53.35} & \underline{57.71} & \underline{68.67} \\
    \midrule
    NCM & 48.67 & 53.62 & 67.82 \\
    PC (Ours) & \textbf{53.41} & \textbf{57.75} & \textbf{69.12} \\
    \bottomrule
    \end{tabular}
    \caption{Comparison to state-of-the-art on CIFAR100-LT. Prototype classifier achieves superior results across all imbalance ratios.}
    \label{tab:table 4}
\end{table}

\begin{table}[h!]
  \centering
  \begin{tabular}{l c c c c}
    \toprule
      & Many & Med & Few & All \\
    \midrule
    CE & \underline{65.9} & 37.5 & 7.7 & 44.4 \\
    Focal Loss  & 36.41 & 29.9 & 16.0 & 30.5 \\
    cRT  & 61.8 & 46.2 & 27.3 & 49.6 \\
    DRO-LT  & 64.0 & 49.8 & 33.1 & 53.5 \\
    DisAlign & 61.3 & \textbf{52.2} & 31.4 & 52.9 \\
    WD & \textbf{68.5} & 42.4 & 14.2 & 48.6 \\
    WD + WD + Max  & 62.5 & 50.4 & \underline{41.5} & \underline{53.9} \\
    \midrule
    NCM & 62.0 & 48.7 & 27.6 & 50.1 \\
    PC (Ours) & 63.5 & \underline{50.8} & \textbf{42.7} & \textbf{54.9} \\
    \midrule
    \rowcolor{lightgray} \multicolumn{5}{c}{SOTA with ``bells and whistles"} \\
    RIDE & 67.9 & 52.3 & 36.0 & 56.1 \\
    PaCO & 63.2 & 51.6 & 39.2 & 54.4 \\
    \bottomrule
  \end{tabular}
    \caption{Comparison to state-of-the-art on ImageNet-LT. We outperform all prior state-of-the-art with single models and no self-supervised pretraining or aggressive data augmentation.}
    \label{tab:table 5}
\end{table}

\begin{table}[h!]
  \centering
  \begin{tabular}{l c c c c }
    \toprule
      & Many & Med & Few & All \\
    \midrule
    CE & \underline{72.2} & 63.0 & 57.2 & 61.7 \\
    Focal Loss  & - & - & - & 61.1 \\
    cRT  & 69.0 & 66.0 & 63.2 & 65.2 \\
    DRO-LT  & - & - & - & 69.7 \\
    DisAlign & 69.0 & \textbf{71.1} & \textbf{70.2} & \textbf{70.6} \\
    WD & \textbf{74.5} & 66.5 & 61.5 & 65.4 \\
    WD + WD + Max  & 71.2 & 70.4 & \underline{69.7} & \underline{70.2} \\
    \midrule
    NCM & 61.0 & 63.5 & 63.3 & 63.1 \\
    PC (Ours) & 71.6 & \underline{70.6} & \textbf{70.2} & \textbf{70.6} \\
    \midrule
    \rowcolor{lightgray} \multicolumn{5}{c}{SOTA with ``bells and whistles"} \\
    RIDE  & 66.5 & 72.1 & 71.5 & 71.3 \\
    PaCO  & 69.5 & 72.3 & 73.1 & 72.3 \\
    \bottomrule
  \end{tabular}
    \caption{Comparison to state-of-the-art on iNaturalist18. We outperform all prior state-of-the-art with single models and no self-supervised pretraining or aggressive data augmentation.}
    \label{tab:table 6}
\end{table}

\section*{Acknowledgements}
Research was partially sponsored by the NSF under award \#2229876.

\bibliographystyle{named}
\bibliography{ijcai23}

\clearpage
\appendix

\Large{\textbf{Appendix}}

\begin{figure*}[!ht]
    \centering
    \begin{subfigure}[t]{0.33\textwidth}
        \includegraphics[width=\textwidth]{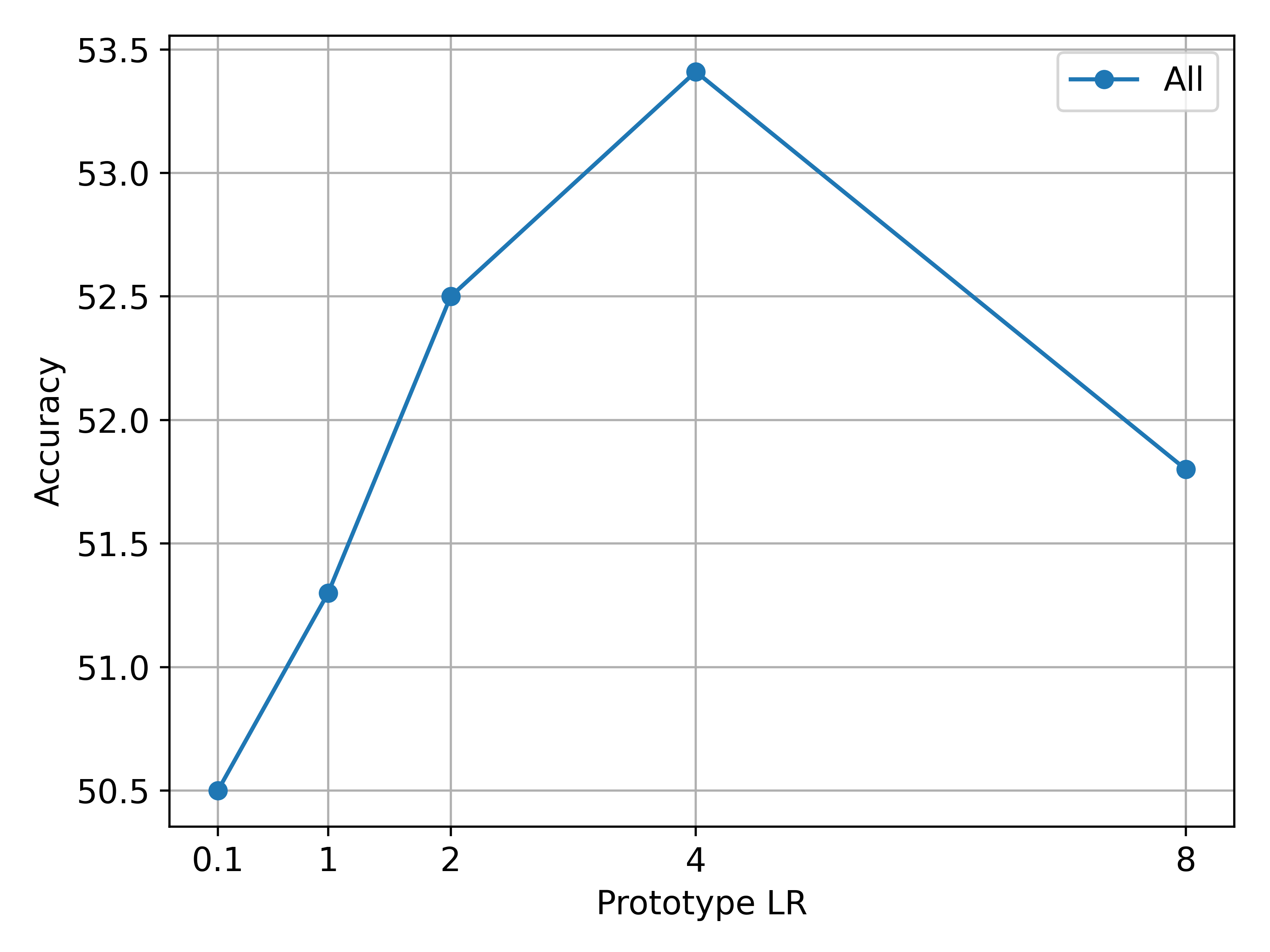}
        \caption{}
        \label{fig:4a}
        \end{subfigure}
      \begin{subfigure}[t]{0.33\textwidth}
        \includegraphics[width=\textwidth]{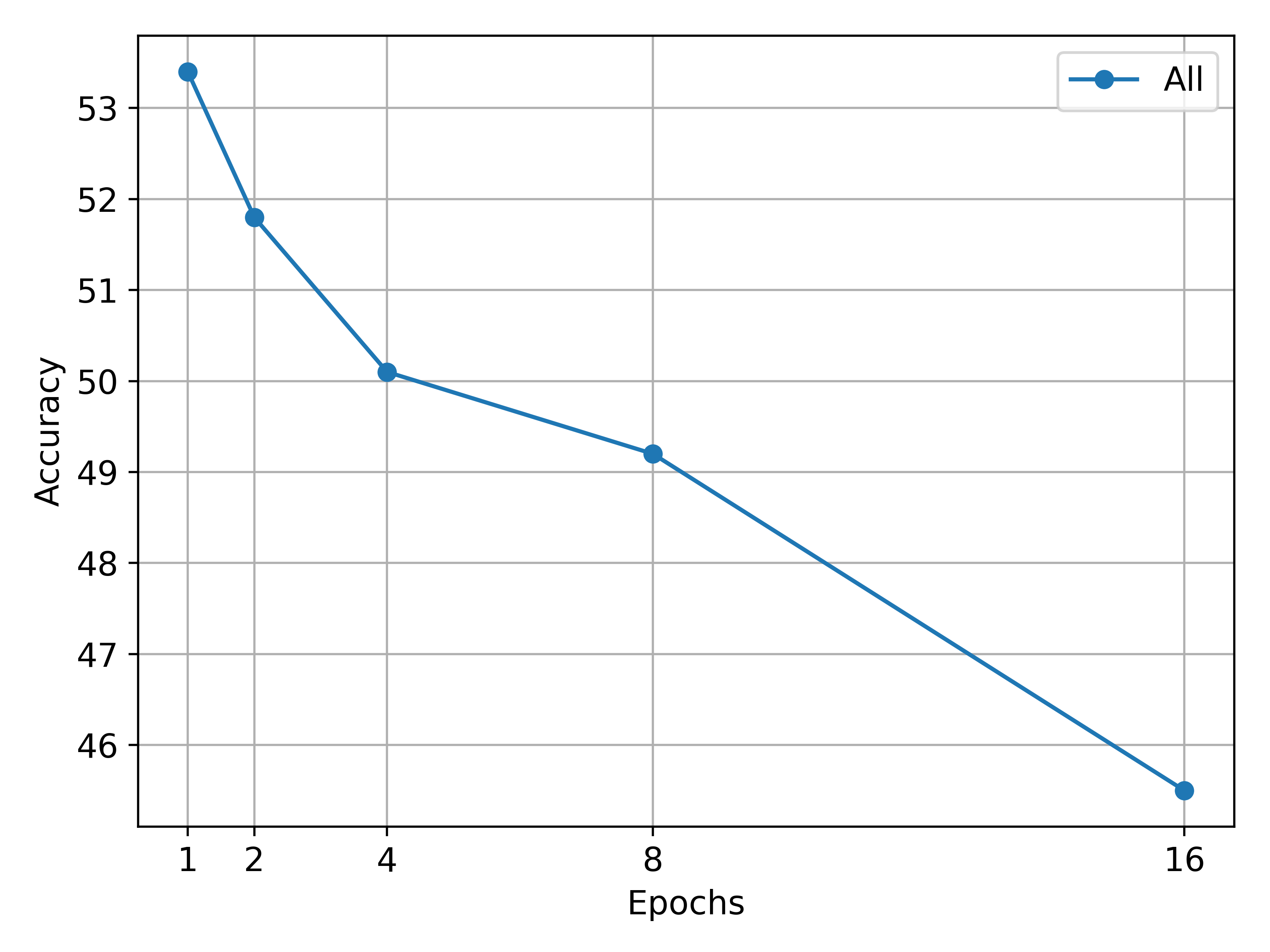}
        \caption{}
        \label{fig:4b}
        \end{subfigure}
      \begin{subfigure}[t]{0.33\textwidth}
        \includegraphics[width=\textwidth]{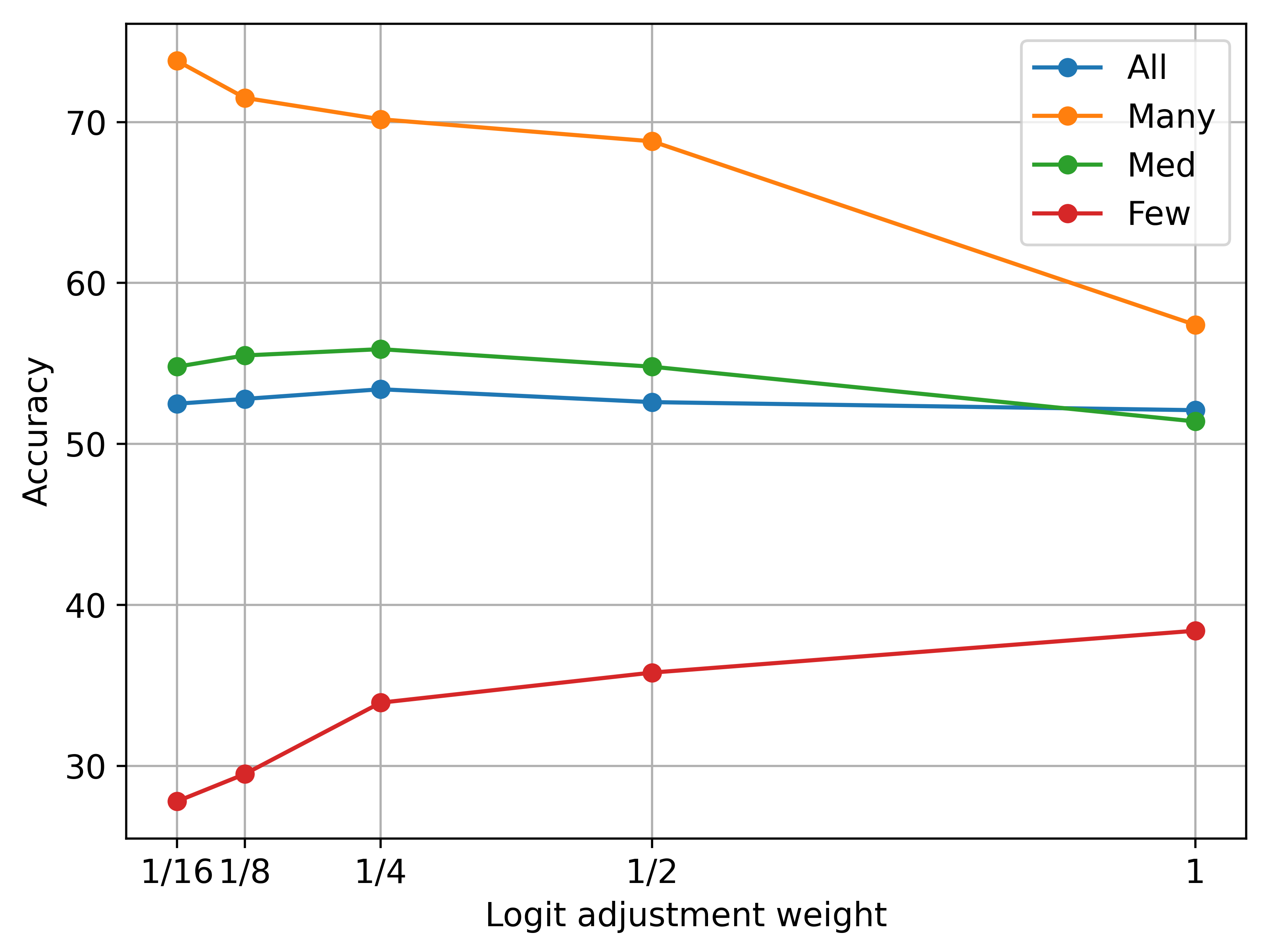}
        \caption{}
        \label{fig:4c}
        \end{subfigure}
   \caption{L-R: a) Effect of the prototype learning rate on CIFAR100-LT. b) Effect of the number of training epochs on CIFAR100-LT. c) Effect of the logit-adjustment weight $\tau$ on CIFAR100-LT.} 
\end{figure*}

\section{Dataset statistics}
We detail the dataset statistics for the three benchmark long-tailed recognition datasets in Table~\ref{tab:table 7}.
\begin{table}[h]
\small
\centering
\begin{tabular}{ p{2cm} | p{1cm} | p{0.8cm} p{0.8cm} p{0.8cm} p{1cm} }
\toprule
Dataset & Attribute & Many & Medium & Few & All \\
 \midrule
\multirow{2}{2cm}{CIFAR10-LT (Imba 100)} & Classes & 8 & 2 & 0 & 10 \\
 & Samples & 12273 & 133 & 0 & 12406 \\
 \midrule
 \multirow{2}{2cm}{CIFAR100-LT (Imba 100)} & Classes & 35 & 35 & 30 & 100 \\
 & Samples & 8824 & 1718 & 305 & 10847 \\
 \midrule
 \multirow{2}{2cm}{CIFAR100-LT (Imba 50)} & Classes & 41 & 40 & 19 & 100 \\
 & Samples & 10331 & 2008 & 269 & 12608 \\
 \midrule
 \multirow{2}{2cm}{CIFAR100-LT (Imba 10)} & Classes & 70 & 30 & 0 & 100 \\
 & Samples & 17743 & 2130 & 0 & 19573 \\
\midrule
\multirow{2}{2cm}{ImageNet-LT} & Classes & 391 & 473 & 136 & 1,000 \\
 & Samples & 89,293 & 24,910 & 1,643 & 115,846 \\
 \midrule
\multirow{2}{2cm}{iNaturalist18} & Classes & 842 & 4076 & 3599 & 8,142 \\
 & Samples & 433,806 & 258,340 & 133,061 & 437,513 \\
\bottomrule
\end{tabular}
\caption{Statistics for training data in CIFAR10-LT, CIFAR100-LT, ImageNet-LT and iNaturalist18.}
\label{tab:table 7}
\end{table}

\section{More training details}
We mention here a few more pertinent training details. The prototype learning is most optimal with high prototype learning rate. The effect of the prototype learning rate is shown in Fig.~\ref{fig:4a}. Further, we find that training for 1 epoch is enough. The effect of training epochs is shown in Fig.~\ref{fig:4b}. Training beyond 1 epoch leads to lower validation accuracy. Thus, the prototype learning has very fast convergence and is quick to train. 

\section{Temperature schemes}
We discuss here some other temperature schemes for learning prototypes. Apart from Channel-dependent temperatures, we also experiment with Class-dependent temperatures. We vary the temperature according to class, allowing for different distance scales specific to the local neighborhood of the class. Specifically, we only allow a per-class temperature $T_y$:
\begin{align}
\label{eq:6}
    d_{Class}(g(x),y) &= \sqrt{\frac{\sum\limits_{i=1}^d(g(x) - c_y)_i^2}{T_y}}
\end{align}
Secondly, we also experiment with Dense temperatures, allowing different temperature scales per-class as well as per-channel $T_{yi}$:
\begin{align}
\label{eq:7}
    d_{Dense}(g(x),y) &= \sqrt{\sum\limits_{i=1}^d\frac{(g(x) - c_y)_i^2}{T_{yi}}}
\end{align}
The three schemes, Channel temps, Class temps and Dense temps are compared in Table~\ref{tab:table 8}. We find that both Class temps and Dense temps have similar results and underperform Channel temps. We hypothesize this is because Channel-temps encourage knowledge transfer from Many to Few shot classes by enabling globally shared feature selection.

\begin{table}[ht]
    \centering
    \begin{tabular}{l | l l l l }
    \toprule
    Method & Many & Med & Few & All \\
    \midrule
    PC & \textbf{74.00} & 55.17 & 24.50 & 52.56 \\
     + Channel T  & 70.77 & \textbf{55.29} & \textbf{28.63} & \textbf{53.06} \\
     + Class T  & 72.33 & 54.83 & 25.62 & 52.47 \\
     + Dense T  & 71.28 & 54.77 & 24.11 & 52.17 \\
    \bottomrule
    \end{tabular}
    \caption{Effect of different temperature schemes learning prototypes, i.e Channel-dependent Temperature, Class-Dependent and Dense temperatures respectively, on CIFAR100-LT. Channel-dependent does the best.}
    \label{tab:table 8}
\end{table}

\section{Effect of logit adjustment weight}
In Fig.~\ref{fig:4c}, we study the effect of the logit adjustment weight on the different performance metrics. We observe a clear trade-off between \textit{Few} accuracy on one hand and \textit{All}, \textit{Many} and \textit{Med} on the other, with higher weight favoring \textit{Few}. Consider that the logit adjustment $\log{N_y}$ leads to a higher loss on tail classes during the optimization process. Therefore, increasing the weight parameter leads to tail loss being over-emphasized during training and causing lower tail class error at the price of higher head class error. We choose $\tau=1/4$ as the optimal weight achieving good accuracy across the spectrum.

\section{Results on CIFAR10-LT}
The results for CIFAR10-LT with imbalance ratio 100 are depicted in Table~\ref{tab:table 9}. The results are similar to CIFAR100-LT, we surpass the competing state-of-the-art methods cRT~\cite{kang2019decoupling}, LDAM~\cite{cao2019learning}, DRO-LT~\cite{samuel2021distributional}, and WD + WD + Max~\cite{alshammari2022long}. 

\begin{table}[h!]
  \centering
  \begin{tabular}{l c c c}
    \toprule
      & All \\
    \midrule
    CE & 72.1 \\
    cRT & 73.5 \\
    LDAM & 77.03 \\
    DRO-LT & 82.6 \\
    WD & 78.6 \\
    WD + WD + Max & \underline{82.9} \\
    \midrule
    NCM & 80.1 \\
    PC (Ours) & \textbf{83.7} \\
    \bottomrule
  \end{tabular}
   \caption{Comparison to state-of-the-art on CIFAR10-LT with imbalance ratio 100. Prototype classifier achieves superior results.}
  \label{tab:table 9}
\end{table}

\end{document}